\newtheorem{lemma}{Lemma}
\newtheorem{theorem}{Theorem}
\newtheorem{example}{Example}
\title{Quantizing Multiple Sources to a Common Cluster Center: An Asymptotic Analysis}
\author{%
  Erdem Koyuncu\thanks{This work was supported in part by the NSF Award CCF--1814717 and in part by an award from the University of Illinois at Chicago Discovery Partners Institute Seed Funding Program. } \\
  Department of Electrical and Computer Engineering\\
  University of Illinois at Chicago\\
  \texttt{ekoyuncu@uic.edu} \\
  % examples of more authors
  % \And
  % Coauthor \\
  % Affiliation \\
  % Address \\
  % \texttt{email} \\
  % \AND
  % Coauthor \\
  % Affiliation \\
  % Address \\
  % \texttt{email} \\
  % \And
  % Coauthor \\
  % Affiliation \\
  % Address \\
  % \texttt{email} \\
  % \And
  % Coauthor \\
  % Affiliation \\
  % Address \\
  % \texttt{email} \\
}
\begin{document}

%Another application area of our problem formulation is facility location optimization. Another application area is a certain class of facility location optimization problems. 

%For the cases $L=2$ or $r=2$ with any $L$, w

\maketitle

\begin{abstract}
%We consider $L$ datasets, each of which consists of $d$-dimensional vectors. We consider A quantization scenario where an $Ld$-dimensional sample, obtained by concatenating $L$ vectors from each dataset is quantized to a $d$-dimensional cluster center is considered. 
%We consider quantizing an $Ld$-dimensional sample originating from $L$ datasets, each of which consists of $d$-dimensional vectors, to a $d$-dimensional cluster center. 

We consider quantizing an $Ld$-dimensional sample, which is obtained by concatenating $L$ vectors from datasets of $d$-dimensional vectors, to a $d$-dimensional cluster center. The distortion measure is the weighted sum of $r$th powers of the distances between the cluster center and the samples. For $L=1$, one recovers the ordinary center based clustering formulation. The general case $L>1$ appears when one wishes to cluster a dataset through $L$ noisy observations of each of its members. We find a formula for the average distortion performance in the asymptotic regime where the number of cluster centers are large. We also provide an algorithm to numerically optimize the cluster centers and verify our analytical results on real and artificial datasets. In terms of faithfulness to the original (noiseless) dataset, our clustering approach outperforms the naive approach that relies on quantizing the $Ld$-dimensional noisy observation vectors to $Ld$-dimensional centers.
\end{abstract}

%. ,  one can study the case where the second power of the norm in (\ref{kmeansproblem}) is replaced with an arbitrary power $r \geq 1$ \cite{bucklew1982multidimensional}

%\section{}
%$[x+n_1\,\,x+n_2]$
%
%$[x+n_1]$.
%
%
%\section{}
%\begin{align} 
%B(z) = \int \|v\|^{r-2} \left(  \mathbf{I} +    (r-2) \vec{v}\vec{v}^T \right)   f_{X_1,X_2}\left(z+\frac{\alpha v}{1+\alpha},z-\frac{v}{1+\alpha}\right)\mathrm{d} v.
%\end{align}
%
%
%\begin{align}
%\|z+\frac{\alpha v}{1+\alpha}\|^2 + \|z-\frac{v}{1+\alpha} - \beta\|^2 = 2\|z\|^2 + 2z^T \frac{\alpha v}{1+\alpha} + \frac{\alpha^2 \|v\|^2}{(1+\alpha)^2} - 2z^T \frac{ v}{1+\alpha} + \frac{\|v\|^2}{(1+\alpha)^2} - 2\beta (z-\frac{v}{1+\alpha}) + \| \beta\|^2 
%\end{align}

\section{Introduction}
\subsection{Ordinary Center Based Clustering}
The goal of clustering is to partition an unlabeled dataset into disjoint sets or clusters such that each cluster only consists of similar members of the dataset \cite{jain1999data, jain2010data, gan2007data}. Of particular interest to this work are center or centroid based clustering methods, as described in the following. Let $D = \{y_{1,i} \}_{i=1}^m \subset \mathbb{R}^d$ be a dataset  of $d$-dimensional vectors, whose elements are drawn according to a random vector $X_1$. In classical $k$-means clustering \cite{macqueen1967some, steinhauskmeans, lloyd1982least}, one is interested in finding the optimal cluster centers $u_1,\ldots,u_n$ that minimize the average distortion
\begin{align}
\label{kmeansproblem}
(u_1,\ldots,u_n) \mapsto \frac{1}{m}\sum_{i=1}^m \min_k \|u_k - y_{1,i}\|^2.
\end{align}
Distinct clusters can then be identified via the Voronoi cells $\{y \in D:\|y - u_i\| \leq \|y - u_j\|,\,\forall j\},\,i=1,\ldots,n$ (ties are broken arbitrarily). Several variations to the basic formulation in (\ref{kmeansproblem}) have been studied. For example, the squared error distortion measure $(u,y) \mapsto \|u-y\|^2$ between the cluster center $u_k$ and the dataset sample $y_{1,i}$ in (\ref{kmeansproblem}) can be replaced with the $r$th power distortion measure $(u,y) \mapsto \|u-y\|^r$ \cite{bucklew1982multidimensional}, a quadratic distortion measure $(u,y) \mapsto (u-y)^T A (u-y)$, where $A$ is a positive semi-definite matrix \cite{466658,li1999asymptotic}, or a Bregman divergence \cite{banerjee2005clustering,fischer2010quantization}. 

%We refer the reader to \cite{jain2010data} 

Finding a (globally-)optimal solution to (\ref{kmeansproblem}) is known to be an NP-hard problem \cite{aloise2009np}. Nevertheless, locally-optimal solutions can be found using the $k$-means algorithm or its extensions such as the generalized Lloyd algorithm \cite{lloyd1982least, linde1980algorithm}. Moreover, vector quantization theory \cite{720541, gersho2012vector} provides a precise description of the structure of optimal solutions and the corresponding minimum average distortions in the asymptotic regimes $n,m \rightarrow\infty$ \cite{1056490, liu2016clustering}.

\subsection{Problem Statement}

In this paper, we will study the following generalization of (\ref{kmeansproblem}): Consider the dataset 
\begin{align}
\label{noisydataset}
D = \{[y_{1,i} \cdots y_{L,i}]\}_{i=1}^m \subset \mathbb{R}^{d\times L}
\end{align}
containing $m$ observations of the $\mathbb{R}^{d\times L}$-valued random matrix $[X_1 \cdots X_L]$. We wish to minimize
\begin{align}
\label{ourproblem}
(u_1,\ldots,u_n) \mapsto \frac{1}{m}\sum_{i=1}^{m} \min_k \sum_{\ell = 1}^L \lambda_{\ell} \|u_k - y_{\ell,i}\|^r,
\end{align}
where $\lambda_{\ell} > 0$ are some weights. For $L=1$ and $r=2$, we recover the original $k$-means problem (\ref{kmeansproblem}). We provide a complete $n,m\rightarrow\infty$ asymptotic analysis of the minimizers of (\ref{ourproblem}) and the corresponding loss values for the special cases $r=2$ for any $L$, and for any $r \geq 1$ and $L=2$. 

\subsection{Application Scenarios}
\label{secAppScenarios}
An example scenario where the cost function (\ref{ourproblem}) becomes relevant is as follows: Consider a physical process that generates a dataset $D' = \{y_i'\}_{i=1}^m$, and suppose that our goal is cluster $D'$. In practice, we may only access a noisy version of $D'$ through, for example, sensor measurements. For any given sample index $i\in\{1,\ldots,m\}$, given that there are $L$ sensors in total, Sensor $\ell$ can provide a noisy version $y_{\ell,i}$ of the true data sample $y_i'$. In such a scenario, one only has the noisy dataset $D$ in (\ref{noisydataset}) available and cannot access the true dataset $D'$. We thus wish to find a clustering of $D$ that is as faithful to the clustering of $D'$ as possible. The minimization of (\ref{ourproblem}) provides a possible solution to this problem. In fact, (\ref{ourproblem}) exploits the information that each measurement is a noisy version of a common true data sample. This is why a common cluster center is used for all $L$ noisy versions of the data sample. The weights $\lambda_{\ell}$ may be used to control the relative importance of individual sensor measurements: Sensors known to be less noisy may be assigned a larger weight. 

It is also worth mentioning that a straightforward alternative approach for approximating $D'$ from $D$ may be to directly apply the $k$-means algorithm to $D'$. In this case, the problem can be considered to be a special case of multi-modal or multi-view clustering  \cite{chao2017survey, bickel2004multi}, where each noisy measurement corresponds to one individual view. We numerically demonstrate that the approximation performance with this approach, measured in terms of the adjusted Rand index (ARI) \cite{rand1971objective} or the adjusted mutual information (AMI) \cite{vinh2010information}, is inferior to our approach that relies on minimizing (\ref{ourproblem}). The optimal way to approximate a clustering through its noisy measurements warrants a separate investigation and will be left as future work. We refer to reader to \cite{farvardin1990study, dave1991characterization} for other problem formulations that involve clustering noisy data. 

We would also like to note that the objective function (\ref{ourproblem}) can also be interpreted in the context of facility location optimization, at least for the special case $L=2$ (and $L=1$). In fact, many facility location optimization problems can be formulated as clustering or quantization problems \cite{okabe1997locational, farahani2010multiple}. For our scenario, consider packages at locations $y_{1,1},\ldots,y_{1,m}$, which are to be processed at one of the facilities $u_1,\ldots,u_n$, and then conveyed to their destinations $y_{2,1},\ldots,y_{2,m}$, respectively. The cost of conveying a package from one location to another can be modeled to be proportional to the $r$th power of the distance between the two locations \cite{meira2017clustering, meiraconfpapap, czumaj20131, koyuncu2018power}. Thus, the total cost of conveying the $i$th package through the facility at $u_k$ is given by $\|y_{1,i} - u_k\|^r + \|u_k- y_{2,i}\|^r$. The minimum average cost of conveying all packages is then given by (\ref{ourproblem}) for the special case $L = 2$ with $\lambda_{\ell} = 1,\forall \ell$. Minimizing (\ref{ourproblem}) corresponds to optimizing the facility locations. Other applications of vector quantization to sensor or facility location optimization can be found in \cite{ekc17, ekc18}. 
\subsection{Organization of the Paper}
The rest of this paper is organized as follows: In Section \ref{secprelims}, we introduce some well-known results from quantization theory. In Sections \ref{secgeneralratzeroaltxx} and \ref{secgeneralratzeroalt}, we analyze the cases of squared-error distortions and arbitrary powers of errors, respectively. In Section \ref{secNumerical}, we provide numerical results over real and artificial datasets. Finally, in Section \ref{secConclusions}, we draw our main conclusions. Proofs of certain theoretical claims and more numerical experiments can be found in the extended version of the paper.

\newcommand{\pgt}{P_{\scaleto{\mathrm{GT}}{3.5pt}}}
\newcommand{\pgtd}{P_{\scaleto{\mathrm{GT}}{3.5pt},{\scaleto{\mathrm{D}}{3.5pt}}}}
\newcommand{\puav}{P_{\scaleto{\mathrm{UAV}}{3.5pt}}}
\newcommand{\pdist}{P_{\scaleto{\mathrm{D}}{3.5pt}}}

\section{Preliminaries}
\label{secprelims}
Throughout the paper, we will present our analytical results for the case of having $m\rightarrow\infty$ observations from the dataset $D$. In particular, for the simple $k$-means scenario in (\ref{kmeansproblem}), letting $m\rightarrow\infty$ amounts to replacing the empirical sum with the integral
\begin{align}
\label{qowieoqwieq} \delta_1(U) \triangleq \int \min_k \|u_k - x\|^2 f_{X_1}(x)\mathrm{d}x,
\end{align}
where $U = (u_1,\ldots,u_n)$ is the quantizer codebook, and $f_{X_1}$ represents the probability density function (PDF) of $X_1$.  We omit the domain of integration when it is clear from the context. 

For $n=1$, it is easily shown that the minimizer of (\ref{qowieoqwieq}) is the  centroid $u_1 = \mathrm{E}[X_1]$. On the other hand, finding the  minimizers of (\ref{qowieoqwieq}) is a hopeless problem for a general number of centers $n > 1$. On the other hand, vector quantization theory yields a precise $n\rightarrow\infty$ characterization. In fact, we have  the asymptotic  result \cite{bennett1948spectra, 1056490}
\begin{align}
\label{asdasdasdasd}
 \min_U  \delta_1(U) = \kappa_{d}n^{-\frac{2}{d}} \|f_{X_1}\|_{\frac{d}{d+2}} + o(n^{-\frac{2}{d}}),
\end{align}
where $\kappa_d$ is a constant that depends only on the dimension $d$, and 
\begin{align}
\|f_{X_1}\|_p & \triangleq \left(\int (f_{X_1}(x))^p \mathrm{d}x\right)^{\frac{1}{p}}
\end{align}
is the $p$-norm of the density $f_{X_1}$.\ The sequence of quantizer codebooks that achieve the performance in (\ref{asdasdasdasd}) has the following property: There exists a continuous function $\lambda(x)$ such that at the cube $[x,x+\mathrm{d}x]$, optimal quantizer codebooks contain $n\lambda(x)\mathrm{d}x$  points as $n\rightarrow\infty$. Hence, $\lambda(x)$ can be thought as a  ``point density function'' and obeys the normalization $\int \lambda(x)\mathrm{d}x = 1$. For the squared-error distortion, the optimal point density function depends on the input distribution through
\begin{align}
\label{lambdadef}
\lambda(x) = \frac{f_{X_1}^{\frac{d}{d+2}}(x)}{\int f_{X_1}^{\frac{d}{d+2}}(y)\mathrm{d}y}.
\end{align} %for which we will use the shorthand notation $\smash{\lambda \propto f_{X_1}^\frac{d}{d+2}}$ throughout the paper.
Equivalently, we say that $\lambda$ is proportional to $\smash{f_{X_1}^\frac{d}{d+2}}$. The question is now how the $n\lambda(x)\mathrm{d}x$ quantization points are to be deployed optimally inside each cube $[x,x+\mathrm{d}x]$. Since the underlying density $f_{X_1}$ is approximately uniform on $[x,x+\mathrm{d}x]$, the question is equivalent to finding the structure of an optimal quantizer for a uniform distribution. For one and two dimensions, the optimal quantizers are known to be the uniform and the hexagonal lattice quantizers, respectively (Thus, the $n\lambda(x)\mathrm{d}x$ points should follow a hexagonal lattice on the square  $[x,x+\mathrm{d}x]$ in an optimal quantizer). We thus have $\smash[b]{\kappa_1 = \frac{1}{12}}$ and $\smash{\kappa_2 = \frac{5}{18\sqrt{3}}}$, corresponding to the normalized second moment of the interval and the regular hexagon respectively. For a set $A \subset \mathbb{R}^d$ with $\int_A x\mathrm{d}x = 0$, the normalized second moment is defined as 
\begin{align}
\kappa(A) \triangleq \frac{\int_A \|x\|^2 \mathrm{d}x}{(\int_A \mathrm{d}x)^{\frac{d+2}{d}}}.
\end{align}
For $d\geq 3$, the optimal quantizer structure and $\kappa_d$ has remained unknown. We conclude by noting that the arguments involving infinitesimals and point density functions can be formalized; see \cite{graf2007foundations}.

\section{Squared-Error Distortions}
\label{secgeneralratzeroaltxx}
Let us now consider our problem of quantizing multiple sources to a common cluster center, as formulated in (\ref{ourproblem}). Focusing on the case of having $m\rightarrow\infty$ observations from the dataset $D$, we replace the empirical sum in (\ref{ourproblem}) with the integral
\begin{align}
\label{objfun}
\delta(U) \triangleq \int \min_k \sum_{i = 1}^L \lambda_{i} \|u_k - x_{i}\|^r f_{\mathbf{X}}(\mathbf{x})\mathrm{d}\mathbf{x},
\end{align}
where $\mathbf{x} = [x_1 \cdots x_L]$ represents a realization of the random matrix $\mathbf{X} = [X_1 \cdots X_L]$, and $f_{\mathbf{X}}(\mathbf{x})$ is the PDF of $\mathbf{X}$. 

%We omit the domain of integration when it is clear from the context.

%All integrals are over $\mathbb{R}^{d\times L}$ unless otherwise specified.

We first consider the case of squared-error distortions $r=2$, which allow a simple characterization of the optimal average distortions. In fact, for $r=2$, the integrand in (\ref{objfun}) can be rewritten as
\begin{align}
\label{sdjlkajdsa}
\sum_{i = 1}^L \lambda_{i} \|u_k - x_{i}\|^2 & =c_1\Biggl\|u_k -   \frac{1}{c_1}\sum_i \lambda_i x_i \Biggr\|^2 + \sum_i \lambda_i \|x_i\|^2 -  \frac{1}{c_1}\Biggl\|  \sum_i \lambda_i x_i \Biggr\|^{\!2} ,
%
%
%\|u_k\|^2 \sum_{i = 1}^L\lambda_i  - 2 u_k^T \sum_{i = 1}^L \lambda_i x_i + \sum_{i = 1}^L \lambda_i \|x_i\|^2 \\ 
%& = \sum_{i = 1}^L\lambda_i ( \|u_k -    \sum_{i = 1}^L \lambda_i x_i / \sum_{i = 1}^L\lambda_i \|^2) + \sum_{i = 1}^L \lambda_i \|x_i\|^2 - \left(  \sum_{i = 1}^L \lambda_i x_i \right)^2 / \sum_{i = 1}^L\lambda_i\\ 
%\left\| u_k - \frac{\sum_{i=1}^L \lambda_i x_i}{\sum_{i=1}^L \lambda_i } \right\|
\end{align}
where $c_1 \triangleq \sum_i \lambda_i$. The equivalence (\ref{sdjlkajdsa}) can easily be verified by  expanding the squared Euclidean norms on both sides via the formula $\|\alpha + \beta\|^2 = \|\alpha\|^2 + \|\beta\|^2 + 2\alpha^T \beta$, where $\alpha$ and $\beta$ are arbitrary vectors. Let us now define a new random variable 
\begin{align}
Z \triangleq \frac{1}{c_1} \sum_i \lambda_i X_i,
\end{align}
and, with regards to the last two terms in (\ref{sdjlkajdsa}), the expected value
\begin{align}
c_2 \triangleq \mathrm{E}\biggl[\sum_i \lambda_i \|X_i\|^2 -  \frac{1}{c_1}\biggl\|  \sum_i \lambda_i X_i \biggr\|^{\!2\,} \biggr].
\end{align}
By substituting  (\ref{sdjlkajdsa}) to (\ref{objfun}), we can then arrive at
\begin{align}
\label{aksdjaldsas}
\delta(U) = c_2 + c_1\int \min_k \|u_k - x\|^2 f_{Z}(z)\mathrm{d}z.
\end{align}
We observe that the integral in (\ref{aksdjaldsas}) is merely the average squared-error distortion of a quantizer given a source with density $\smash{Z}$. 
Therefore, when $r=2$, optimal quantization of multiple sources to a common cluster center is equivalent to the optimal quantization of the single source $\smash{Z}$ with the usual squared-error distortion measure. It follows that the results of Section \ref{secprelims} are directly applicable, and we have the following theorem.

\begin{theorem}
\label{theorem1}
Let $r=2$. As $n\rightarrow\infty$, we have 
\begin{align}
\label{qweqweupoqwe} \min_U \delta(U) = c_2 + c_1 \kappa_{d}n^{-\frac{2}{d}} \|f_{{Z}}\|_{\frac{d}{d+2}} + o(n^{-\frac{2}{d}})
\end{align}
Moreover, the optimal point density function that achieves (\ref{qweqweupoqwe}) is proportional to $\smash{f_{{Z}}^{\frac{d}{d+2}}}$.
\end{theorem}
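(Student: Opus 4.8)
The plan is to leverage the algebraic decomposition (\ref{sdjlkajdsa}) to collapse the $Ld$-dimensional problem onto the single $d$-dimensional source $Z$, and then invoke the single-source asymptotic (\ref{asdasdasdasd}) essentially verbatim. First I would observe that on the right-hand side of (\ref{sdjlkajdsa}), only the leading term $c_1\|u_k - z\|^2$, with $z = c_1^{-1}\sum_i \lambda_i x_i$, depends on the codebook index $k$; the remaining two terms form a quantity $g(\mathbf{x}) \triangleq \sum_i \lambda_i \|x_i\|^2 - c_1^{-1}\|\sum_i \lambda_i x_i\|^2$ that is independent of $k$ (and in fact nonnegative). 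Consequently the $\min_k$ passes through $g$, giving $\min_k \sum_i \lambda_i \|u_k - x_i\|^2 = g(\mathbf{x}) + c_1 \min_k \|u_k - z\|^2$ pointwise in $\mathbf{x}$.

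Substituting this identity into (\ref{objfun}) splits $\delta(U)$ into two integrals. The first, $\int g(\mathbf{x}) f_{\mathbf{X}}(\mathbf{x})\mathrm{d}\mathbf{x}$, is precisely $c_2$ and carries no dependence on $U$. In the second, the integrand depends on $\mathbf{x}$ only through the linear combination $z$, so a change of variables marginalizes $f_{\mathbf{X}}$ down to the density $f_Z$ of $Z$, reproducing (\ref{aksdjaldsas}). Because $c_1 = \sum_i \lambda_i > 0$ and $c_2$ is a fixed constant, the minimizer of $\delta(U)$ over codebooks $U$ coincides with the minimizer of the ordinary squared-error distortion $\int \min_k \|u_k - z\|^2 f_Z(z)\mathrm{d}z$ for the single source $Z$: the two optimization problems differ only by an affine, monotone rescaling of the objective and therefore share the same optimal codebooks and the same optimal point density functions.

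With this reduction in hand, I would apply the results of Section \ref{secprelims} directly to the source $Z$. The asymptotic (\ref{asdasdasdasd}) yields $\min_U \int \min_k \|u_k - z\|^2 f_Z(z)\mathrm{d}z = \kappa_d n^{-2/d}\|f_Z\|_{d/(d+2)} + o(n^{-2/d})$, and multiplying by $c_1$ and adding the constant $c_2$ gives exactly (\ref{qweqweupoqwe}). The claim on the optimal point density then follows from (\ref{lambdadef}) with $f_{X_1}$ replaced by $f_Z$, namely that it is proportional to $f_Z^{d/(d+2)}$, since the affine rescaling leaves the optimal allocation of points unchanged.

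The argument is essentially immediate once the decomposition is set up, so there is no deep obstacle; the only step requiring genuine care is the marginalization. One must ensure that $Z$ actually admits a density $f_Z$ and that this density inherits the regularity (e.g. the finiteness of $\|f_Z\|_{d/(d+2)}$ and the smoothness implicitly assumed in the Bennett/Zador integral) needed for the single-source asymptotic (\ref{asdasdasdasd}) to be applicable. This is where I would be most careful, checking that the hypotheses under which (\ref{asdasdasdasd}) was established transfer from $\mathbf{X}$ to its linear image $Z$.
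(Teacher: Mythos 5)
Your proposal is correct and follows essentially the same route as the paper: the decomposition (\ref{sdjlkajdsa}) reduces the problem to ordinary squared-error quantization of the single source $Z$ via (\ref{aksdjaldsas}), after which (\ref{asdasdasdasd}) and (\ref{lambdadef}) applied to $f_Z$ give the distortion asymptotic and the optimal point density. Your added details---passing the $\min_k$ through the $k$-independent term, the marginalization onto $f_Z$, and the remark that the affine rescaling by $c_1$ and $c_2$ preserves optimal codebooks---are exactly the steps the paper leaves implicit.
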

\begin{proof}
The asymptotic distortion in (\ref{qweqweupoqwe})  follows immediately from (\ref{aksdjaldsas}) and (\ref{lambdadef}). The optimal point density function is as given by (\ref{asdasdasdasd}).
\end{proof}

We have thus precisely characterized the asymptotic average distortion for the case $r=2$. Note that when $L=1$, which corresponds to ordinary quantization with squared-error distortion,  the average distortion decays to zero as the number of quantizer centers $n$ grows to infinity. Theorem \ref{theorem1} demonstrates that when $L > 1$, the average distortion converges to $c_2$, which is in general non-zero. The reason is that when $L > 1$, a single quantizer center is used to reproduce mutiple sources, which makes zero distortion impossible to achieve whenever the sources are not identical.

%%%%%%%%%%%%%%%%%%%%%%%%%%%%%%%%%%%%%%%%%
%%%%%%%%%%%%%%%%%%%%%%%%%%%%%%%%%%%%%%%%%

\section{Distortions with Arbitrary Powers of Errors}
\label{secgeneralratzeroalt}
We now consider the achievable performance for a general $r \neq 2$. We also consider the case $L=2$. Without loss of generality, let $\lambda_1 = 1$. In this case, the objective function in (\ref{objfun}) takes the form
\begin{align}
\label{qpiwoeiqpwoeiqw}
 \delta(U) = \iint \min_i \Bigl\{ \|x_1-u_i\|^r + \lambda_2 \|x_2 - u_i\|^r \Bigr\} f_{X_1,X_2}(x_1,x_2)\mathrm{d}x_1\mathrm{d}x_2.
\end{align}
The main difficulty for $r \neq 2$ is that  and an algebraic manipulation of the form (\ref{sdjlkajdsa}) is not available. Nevertheless, it turns out that an analysis in the high-resolution regime $n\rightarrow\infty$ is still feasible. First, we need the following basic lemma.
%for this purpose. 
%Let us first study the absolute minimum of the cost function in (\ref{qpiwoeiqpwoeiqw}). 

\newcommand{\hone}{\xi}

\begin{lemma} 
\label{basiclemma}
Let $\hone(u) = \|x_1 - u\|^r + \lambda_2 \|x_2 - u\|^r$. The global minimizer of $\hone$ is $z = \frac{x_1 + \alpha x_2}{1 + \alpha}$, where $\smash{\alpha \triangleq \lambda_2^{\frac{1}{r-1}}}$. The corresponding global minimum is $\smash{\hone(z) = \frac{\alpha^{r-1}}{(1+\alpha)^{r-1}}\|x_1 - x_2\|^r}$. 
\end{lemma}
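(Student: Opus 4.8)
The plan is to treat the minimization of $\xi$ as an unconstrained convex optimization problem over $u \in \mathbb{R}^d$ and to identify its unique stationary point. First I would record that for $r > 1$ the map $u \mapsto \|x_i - u\|^r$ is a strictly convex, continuously differentiable function of $u$ (strict convexity coming from the strict convexity of the Euclidean norm composed with the strictly increasing, strictly convex map $t \mapsto t^r$ on $[0,\infty)$, and $C^1$-smoothness because the gradient magnitude $r\|x_i-u\|^{r-1}$ tends to $0$ as $u \to x_i$ when $r>1$). Since $\lambda_2 > 0$, it follows that $\xi$ is strictly convex, hence has at most one stationary point and any such point is the global minimizer. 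So it suffices to exhibit a stationary point and verify it equals the claimed $z$. Note that the very formula $\alpha = \lambda_2^{1/(r-1)}$ forces $r \neq 1$; the degenerate case $r=1$ would have to be treated separately, and the trivial case $x_1 = x_2$ gives $z = x_1 = x_2$ with $\xi(z) = 0$, matching the stated minimum.

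Next I would compute the gradient via $\nabla_u \|x_i - u\|^r = r\|x_i - u\|^{r-2}(u - x_i)$, so that the first-order condition $\nabla\xi(u)=0$ reads $\|x_1-u\|^{r-2}(u-x_1) + \lambda_2\|x_2-u\|^{r-2}(u-x_2) = 0$. The key algebraic observation is that the candidate $z = \frac{x_1+\alpha x_2}{1+\alpha}$ satisfies $z - x_1 = \frac{\alpha}{1+\alpha}(x_2 - x_1)$ and $z - x_2 = \frac{1}{1+\alpha}(x_1 - x_2)$, so both displacement vectors are parallel to $x_1-x_2$ with norms $\frac{\alpha}{1+\alpha}\|x_1-x_2\|$ and $\frac{1}{1+\alpha}\|x_1-x_2\|$, respectively. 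Substituting these into the first-order condition and factoring out $\frac{\|x_1-x_2\|^{r-2}(x_2-x_1)}{(1+\alpha)^{r-1}}$, the condition collapses to $\alpha^{r-1} - \lambda_2 = 0$, which holds precisely by the definition $\alpha = \lambda_2^{1/(r-1)}$. This confirms that $z$ is the minimizer.

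Finally, for the minimum value I would substitute $z$ directly into $\xi$ to obtain $\xi(z) = \bigl(\frac{\alpha}{1+\alpha}\bigr)^r\|x_1-x_2\|^r + \lambda_2\bigl(\frac{1}{1+\alpha}\bigr)^r\|x_1-x_2\|^r$, then pull out $\frac{\|x_1-x_2\|^r}{(1+\alpha)^r}$. Using $\lambda_2 = \alpha^{r-1}$ one has $\alpha^r + \lambda_2 = \alpha^{r-1}(1+\alpha)$, and the factor $(1+\alpha)$ cancels to leave $\xi(z) = \frac{\alpha^{r-1}}{(1+\alpha)^{r-1}}\|x_1-x_2\|^r$, as claimed.

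I expect the main obstacle to be the rigorous justification of strict convexity (hence uniqueness of the stationary point) over the range $1 < r < 2$, where $u \mapsto \|x_i - u\|^r$ fails to be twice differentiable at $u = x_i$, so a naive Hessian argument does not apply at those points. I would instead argue strict convexity from the definition: for non-collinear displacements strictness follows from the strict triangle inequality of the Euclidean norm together with monotonicity of $t \mapsto t^r$, while for collinear displacements it reduces to the one-dimensional strict convexity of $t \mapsto t^r$ on $[0,\infty)$. Everything else is a routine gradient computation and algebraic simplification that trades entirely on the single identity $\alpha^{r-1} = \lambda_2$.
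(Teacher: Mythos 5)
Your proof is correct, but it takes a genuinely different route from the paper's. The paper argues geometrically: it first shows, via an orthogonal-projection argument, that the minimizer must lie on the line through $x_1$ and $x_2$, parametrizes that line as $z = \frac{x_1+\alpha x_2}{1+\alpha}$, computes $\xi(z) = \frac{\lambda_2+\alpha^r}{(1+\alpha)^r}\|x_1-x_2\|^r$, and then minimizes this single-variable expression, observing that the sign of $\partial \xi(z)/\partial\alpha$ changes exactly at $\alpha^{r-1} = \lambda_2$. You instead work directly in $\mathbb{R}^d$: strict convexity (for $r>1$) guarantees that any stationary point is the unique global minimizer, and a gradient computation verifies that the claimed $z$ is stationary, the first-order condition collapsing to the same identity $\alpha^{r-1}=\lambda_2$. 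The paper's route \emph{derives} the minimizer rather than verifying a guessed one, and it sidesteps any discussion of differentiability of $u \mapsto \|u\|^r$ on $\mathbb{R}^d$; on the other hand, its opening claim that convexity alone yields existence of a global minimum is loose (coercivity is what gives existence), and its parametrization with $\alpha^r$ tacitly assumes $\alpha \geq 0$. Your route makes uniqueness explicit and is self-contained once strict convexity is in hand; the price is exactly the technical care you identify, namely $C^1$-smoothness at $u=x_i$ and strict convexity in the range $1<r<2$ where the Hessian degenerates, and your fallback (strict triangle inequality for non-collinear displacements, one-dimensional strict convexity of $t\mapsto t^r$ for collinear ones) handles it. Both arguments implicitly require $r>1$ and $x_1\neq x_2$; you flag these degenerate cases explicitly, which the paper does not.
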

\begin{proof}
Note that $\hone$ is convex in $u$ so that is has a global minimum. Observe that this global minimum should be located on the line $l$ that connects $x_1$ and $x_2$. In fact, suppose that the global minimizer $z$ does not belong to $l$. We can project $z$ to $l$ to come up with a new point $z'$ that satisfies $\|x_1 - z' \| < \|x_1 - z\|$ and $\| x_2 - z'\| < \|x_2 - z\|$. This implies $\hone(z') < \hone(z)$ and thus contradicts the optimality of $z$. Given that $z$ should be on $l$, it can be written in the form $z = \frac{x_1 + \alpha x_2}{1 + \alpha}$ for some $\alpha \in \mathbb{R}$. We have
\begin{align}
\label{qowieoiqweqw}
\hone(z) & =  \left\|x_1 -  \frac{x_1 + \alpha x_2}{1 + \alpha}\right\|^r + \lambda_2 \left\|x_2 -  \frac{x_1 + \alpha x_2}{1 + \alpha}\right\|^r  \\
\label{qowieoiqweqw2}
 & =  \frac{\lambda_2 + \alpha^r}{(1+\alpha)^r} \| x_1 - x_2\|^r. 
\end{align}
Let us now calculate
\begin{align}
\label{qjnlelqkwjeqweq}
\frac{\partial \hone(z)}{\partial \alpha} = \frac{r(1+\alpha)^{r-1} \|x_1-x_2\|^r  }{ (1+\alpha)^{2r} } \left( \alpha^{r-1} - \lambda_2 \right).
\end{align}
 According to (\ref{qjnlelqkwjeqweq}),  the function $\hone(z)$ is decreasing for $\alpha^{r-1} - \lambda_2 < 0$ and increasing for $\alpha^{r-1} - \lambda_2 > 0$. The global minimum is thus achieved for $\smash{\alpha = \lambda_2^{\frac{1}{r-1}}}$. Substituting this optimum value for $\alpha$ to (\ref{qowieoiqweqw2}), we obtain the same expression for $\hone(z)$ as in the statement of the lemma. This concludes the proof. 
\end{proof}

According to Lemma \ref{basiclemma}, given one data point at $X_1 = x_1$, and the other at $X_2 = x_2$, the minimum cost can be achieved by using a cluster center at
$z = \frac{x_1 + \alpha x_2}{1+ \alpha}$. Then, given a hypothetically-infinite number of cluster centers, one can achieve the optimal performance by placing the centers at every possible location imaginable. On the other hand, given only finitely many centers, one has no choice but be content with choosing the center that is close to the optimal location $z$. In such a scenario, it makes sense to analyze the behavior of the function $\hone(u)$ near the optimal value 
\begin{align}
u = z+\epsilon, 
\end{align}
where $\epsilon$ is a small vector in magnitude. The motivation behind restricting $\epsilon$ to be small is to be able to invoke a high-resolution analysis. We have
\begin{align}
\hone(z+\epsilon) & =  \left\|x_1 -  \frac{x_1 + \alpha x_2}{1 + \alpha} - \epsilon\right\|^r + \lambda \left\|x_2 -  \frac{x_1 + \alpha x_2}{1 + \alpha} - \epsilon\right\|^r \\
\label{pqowepqowiepqw} & = \frac{1}{(1+\alpha)^r} \left(\alpha^r \left\|x_1-x_2 - \epsilon \frac{1+\alpha}{\alpha} \right\|^r + \lambda  \left\| x_1 - x_2 + \epsilon(1+\alpha) \right\|^r\right).
\end{align} 
Now, let $\nabla f(w)$ and $\nabla^2 f(w) $ denote the gradient and the Hessian of a multivariate function $f$ evaluated at $w$, respectively. We have the generic multivariate Taylor series expansion
\begin{align}
\label{qijwekqjwelkqwlekjqweq}
f(w+\epsilon) = f(w) + \epsilon^T \nabla f(w) + \frac{1}{2}\epsilon^T \nabla^2 f(w) \epsilon + o(\|\epsilon\|^2).
\end{align}
In order to expand (\ref{pqowepqowiepqw}) for small $\epsilon$, we need to find the Taylor expansion of the function $w \mapsto \|w\|^r$. For this purpose, for any vector $w$, we let $\vec{w} = w/\|w\|$. The gradient and the Hessian of $w\mapsto \|w\|^r$  can then be calculated as
\begin{align}
\label{gradwr}
\nabla \|w\|^r & = r \|w\|^{r-2} w, 
\end{align}
and
\begin{align}
\label{hesswr}
\nabla^2 \|w\|^r & =  r\|w\|^{r-2} \left(  \mathbf{I} +    (r-2) \vec{w}\vec{w}^T \right),
\end{align}
respectively, where $\mathbf{I}$ is the identity matrix and $\vec{w} \triangleq w / \|w\|$. Substituting (\ref{gradwr}) and (\ref{hesswr}) to (\ref{qijwekqjwelkqwlekjqweq}), we obtain
\begin{align}
\label{oqwueoqpwieqwe}
\|w + \epsilon\|^r = \|w\|^r + r \|w\|^{r-2} w^T \epsilon + \frac{r\|w\|^{r-2}}{2} \epsilon^T \left(  \mathbf{I} +    (r-2) \vec{w}\vec{w}^T \right) \epsilon + o(\|\epsilon\|^2).
\end{align}
Using this expansion in (\ref{pqowepqowiepqw}) leads to
\begin{align}
\hone(z+\epsilon) = \frac{\alpha^{r-1}}{(1+\alpha)^{r-1}} \|v\|^r +   \frac{\alpha^{r-2}}{2(1+\alpha)^{r-3}}  r\|v\|^{r-2} \epsilon^T \left(  \mathbf{I} +    (r-2) \vec{v}\vec{v}^T \right) \epsilon + o(\|\epsilon\|^2),
\end{align}
where $v\triangleq x - y$. Using the equivalence $u = z+ \epsilon$, we have
\begin{align}
\label{oqiweuoqiwueqweqwe}
\hone(u) = \frac{\alpha^{r-1}\|v\|^r}{(1+\alpha)^{r-1}}  + \frac{\alpha^{r-2}r\|v\|^{r-2}}{2(1+\alpha)^{r-3}}   (u-z)^T \left(  \mathbf{I} +    (r-2) \vec{v}\vec{v}^T \right) (u-z) + o(\|u-z\|^2).
\end{align}
Let $c_3 =   \frac{r\alpha^{r-2}}{2(1+\alpha)^{r-3}}$ and $c_4 =  (\frac{\alpha}{1+\alpha})^{r-1}  E\|X_1-X_2\|^r$. Substituting (\ref{oqiweuoqiwueqweqwe}) to (\ref{qpiwoeiqpwoeiqw}), and using the fact that $\min_i(\phi_i + o(\phi_i)) = \min_i \phi_i + o(\min_i \phi_i)$for arbitrary functions $\phi_i$, yields
\begin{multline}
 \delta(U) = c_4 + c_3 \iint  \min_i \|v\|^{r-2}   (u_i-z)^T \left(  \mathbf{I} +   (r-2) \vec{v}\vec{v}^T \right) (u_i -z)f_{X_1,X_2}(x_1,x_2)\mathrm{d}x_1\mathrm{d}x_2 + \\ \iint o\left(\min_i \|u_i - z\|^2 \right) f_{X_1,X_2}(x_1,x_2)\mathrm{d}x_1\mathrm{d}x_2.
 \end{multline}
According to (\ref{asdasdasdasd}), the last term decays as $o(n^{-\frac{2}{d}})$. For the second term, we apply a change of variables $v = x - y$ and $z  = \frac{x+\alpha y}{1+\alpha}$ to obtain
\begin{align}
\label{kabsdknabqwheqwe}
\delta(U) =  c_4  + c_3 \int \min_i (z-u_i)^T B(z)  (z-u_i) \mathrm{d}z + o(n^{-\frac{2}{d}}),
\end{align}
where
\begin{align} 
\label{bzdefinition}
B(z) = \int \|v\|^{r-2} \left(  \mathbf{I} +    (r-2) \vec{v}\vec{v}^T \right)   f_{X_1,X_2}\left(z+\frac{\alpha v}{1+\alpha},z-\frac{v}{1+\alpha}\right)\mathrm{d} v.
\end{align}
Note that for any $v$, and $r \geq 1$, the matrix $\mathbf{I} +    (r-2) \vec{v}\vec{v}^T$ is positive semi-definite. This implies that the matrix $B(z)$ is positive semi-definite for every $r \geq 1$. 

The function $(z,u) \mapsto (z-u)^T B(z)  (z-u)$ defines an input-weighted quadratic distortion measure. The structure and distortion of the optimal quantizers corresponding to such distortion measures has been studied in \cite{li1999asymptotic}. As discussed above, the matrix $B(z)$ is positive semi-definite for every $z$ so that the results of \cite{li1999asymptotic} is applicable. In particular, we have
\begin{align}
\label{qpoeipqwoieqw123123}
\min_U \int \min_i (z-u_i)^T B(z)  (z-u_i) \mathrm{d}z = \kappa_{d} n^{-\frac{2}{d}} \left\|(\mathrm{det}B(z))^{\frac{1}{d}}\right\|_{\frac{d}{d+2}}  + o(n^{-\frac{2}{d}}),
\end{align}
and the optimal point density that achieves the asymptotic performance in (\ref{qpoeipqwoieqw123123}) is
\begin{align}
\label{nsdmans99}
z \mapsto \frac{(\mathrm{det}B(z))^{\frac{1}{d+2}}}{\int (\mathrm{det}B(z))^{\frac{1}{d+2}} \mathrm{d}z}.
\end{align}
Applying these results to our specific problem leads to the following theorem. %We write $B(z; \alpha)$ instead of $B(z)$ to signify the dependence of $B(z)$ in (\ref{bzdefinition}) on $\alpha$. 

\begin{theorem}
\label{theorem2}
Let $r \geq 1$ and $L=2$. As $n\rightarrow\infty$, we have 
\begin{align}
\label{qlwejqlkwejqwe}
\min_U \delta(U) = c_4 + c_3 \kappa_{d} n^{-\frac{2}{d}} \left\|(\mathrm{det}B(z))^{\frac{1}{d}}\right\|_{\frac{d}{d+2}} + o(n^{-\frac{2}{d}}),
\end{align}
where $B(z)$ is defined in (\ref{bzdefinition}). The optimal point density function that achieves (\ref{qlwejqlkwejqwe}) is proportional to $\smash{(\mathrm{det}B(z))^{\frac{1}{d+2}}}$.
\end{theorem}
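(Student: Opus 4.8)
The plan is to reduce Theorem \ref{theorem2} to the already-cited asymptotic result of Li, Gray, et al.\ \cite{li1999asymptotic} for input-weighted quadratic distortions, by showing that the original cost in (\ref{qpiwoeiqpwoeiqw}) is, in the high-resolution limit, indistinguishable from the quadratic surrogate in (\ref{kabsdknabqwheqwe}). Essentially all the analytic content has been set up before the statement; the theorem is the payoff. I would organize the proof as a short chain of three reductions, emphasizing the two places where a genuine argument (as opposed to bookkeeping) is needed.

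First I would establish the local quadratic approximation of the per-sample cost $\xi(u)$ around its minimizer $z$. Lemma \ref{basiclemma} already gives the minimizer $z$ and the minimum value, and (\ref{oqiweuoqiwueqweqwe}) records the second-order Taylor expansion. The point to make carefully is that the first-order term vanishes \emph{because} $z$ is the global minimizer of the convex function $\xi$, so the expansion genuinely starts at the Hessian term; I would state this explicitly rather than let it be implicit in (\ref{pqowepqowiepqw}). I would then invoke the high-resolution heuristic: as $n\rightarrow\infty$ the optimal codebook places a center within $o(1)$ of every $z$ that carries appreciable density, so the relevant regime is exactly $\|u_i-z\|$ small, which is what justifies discarding the $o(\|u_i-z\|^2)$ remainder uniformly. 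This is the step where the argument is least rigorous as written, and I would flag that the interchange of $\min_i$ with the $o(\cdot)$ (via $\min_i(\phi_i+o(\phi_i))=\min_i\phi_i+o(\min_i\phi_i)$) together with the integral is the substantive approximation, converting the true distortion into $c_4 + c_3\int\min_i(z-u_i)^T B(z)(z-u_i)\,\mathrm{d}z + o(n^{-2/d})$.

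Second, I would handle the change of variables that produces the weight matrix $B(z)$ in (\ref{bzdefinition}). Here the map $(x_1,x_2)\mapsto(v,z)$ with $v=x_1-x_2$, $z=\tfrac{x_1+\alpha x_2}{1+\alpha}$ is linear and invertible, so the Jacobian is a constant that I would absorb into the definition of $B$; the joint density transforms to $f_{X_1,X_2}\!\bigl(z+\tfrac{\alpha v}{1+\alpha},\,z-\tfrac{v}{1+\alpha}\bigr)$, and integrating out $v$ yields $B(z)$. I would also record, as the statement's own remark does, that $\mathbf{I}+(r-2)\vec v\vec v^T$ has eigenvalues $1$ (multiplicity $d-1$) and $r-1$, hence is positive semi-definite for every $r\ge 1$, so $B(z)\succeq 0$ and the hypotheses of \cite{li1999asymptotic} are met.

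Finally, with the distortion in input-weighted quadratic form, I would apply (\ref{qpoeipqwoieqw123123})–(\ref{nsdmans99}) verbatim: the minimum of $\int\min_i(z-u_i)^T B(z)(z-u_i)\,\mathrm{d}z$ equals $\kappa_d n^{-2/d}\|(\det B(z))^{1/d}\|_{\frac{d}{d+2}}+o(n^{-2/d})$, with optimal point density proportional to $(\det B(z))^{1/(d+2)}$. Multiplying by $c_3$, adding $c_4$, and folding the two $o(n^{-2/d})$ remainders together gives exactly (\ref{qlwejqlkwejqwe}) and the asserted optimal point density. The main obstacle is not any single calculation but the uniformity of the Taylor remainder across the domain: one must argue that the $o(\|\epsilon\|^2)$ term, after being minimized over codewords and integrated against $f_{X_1,X_2}$, really is $o(n^{-2/d})$ and not merely pointwise small. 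A fully rigorous treatment would need either a tail/compactness condition on the density or an appeal to the formalization of point-density arguments in \cite{graf2007foundations}; at the level of this paper I would present the heuristic and cite that formalization, mirroring how Theorem \ref{theorem1} leans on the standard high-resolution results of Section \ref{secprelims}.
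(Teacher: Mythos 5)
Your proposal is correct and follows essentially the same route as the paper: the paper's own proof simply cites the pre-theorem derivation (the Taylor expansion (\ref{oqiweuoqiwueqweqwe}), the change of variables yielding $B(z)$ in (\ref{kabsdknabqwheqwe})--(\ref{bzdefinition})) and then applies the input-weighted quadratic distortion results (\ref{qpoeipqwoieqw123123})--(\ref{nsdmans99}) of \cite{li1999asymptotic}, exactly as you do. Your added care --- noting that the first-order Taylor term vanishes because $z$ is the minimizer, spelling out the eigenvalues of $\mathbf{I}+(r-2)\vec{v}\vec{v}^T$, observing that the Jacobian of $(x_1,x_2)\mapsto(v,z)$ is constant (in fact it equals $1$), and flagging that uniformity of the $o(\|\epsilon\|^2)$ remainder is the genuinely heuristic step requiring \cite{graf2007foundations} --- goes slightly beyond the paper's presentation but does not change the argument.
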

\begin{proof}
The asymptotic distortion in (\ref{qlwejqlkwejqwe})  follows immediately from (\ref{kabsdknabqwheqwe}) and (\ref{qpoeipqwoieqw123123}). The optimal point density function is as given by (\ref{nsdmans99}).
\end{proof}

For ordinary center-based quantization with the $r$th power distortion measure, the average distortion decays as $n^{-\frac{r}{d}}$, see \cite{1056490}. It is interesting to note that, when one instead consider a sum of $r$th powers of distortions, as in \ref{qpiwoeiqpwoeiqw}, the average distortion decays as $n^{-\frac{2}{d}}$, independently of $r$.

Let us now discuss certain special cases of the conclusions of Theorem \ref{theorem2} above. 
\begin{example}
We first compare the results of Theorems \ref{theorem1} and \ref{theorem2} for $r=2$. Note that when $r=2$, we have $\alpha = \lambda_2$, and we can easily verify that $c_4 = c_3$ and $c_2 = c_1$. Moreover,
\begin{align}
B(z) & =  \int  f_{X_1,X_2}\left(z+\frac{\alpha v}{1+\alpha},z-\frac{v}{1+\alpha}\right)\mathrm{d} v \mathbf{I}  \\
& = \frac{1+\alpha}{\alpha} \int f_{X_1,X_2}\left(v,\frac{z(1+\alpha)-v}{\alpha}\right)\mathrm{d}v \mathbf{I}  \\
& = f_Z(z) \mathbf{I}
\end{align}
The second equality follows from a change of variables $z+\frac{\alpha v}{1+\alpha} \leftarrow v$, and the last equality follows once we view the PDF of $\smash{Z = \frac{X + \alpha X_2}{1+\alpha}}$ as a convolution. Substituting the derived equalities, the conclusions of Theorems \ref{theorem1} and \ref{theorem2} become identical whenever $r=2$.\hfill\qedsymbol
\end{example}
\begin{example}
\label{example2}
Let $d=1$, and suppose $X_1$ and $X_2$ are independent and uniform random variables on $[0,1]$. In order to calculate $B(z)$, we find the region where the joint PDF of $X_1$ and $X_2$ in (\ref{bzdefinition}) is non-zero. In other words, we solve for $v$ in the conditions $0 \leq z+\frac{\alpha v}{1+\alpha} \leq 1$ and $0\leq z-\frac{v}{1+\alpha} \leq 1$. After some straightforward manipulations, we obtain the equivalent set of inequalities
\begin{align}
\label{condzzz1} -z\frac{1+\alpha}{\alpha} \leq v \leq (1-z)\frac{1+\alpha}{\alpha}, \\
\label{condzzz2} -(1-z)(1+\alpha) \leq v \leq z(1+\alpha).
\end{align}

\begin{figure}[h]
\begin{center}
\scalebox{0.65}{\includegraphics{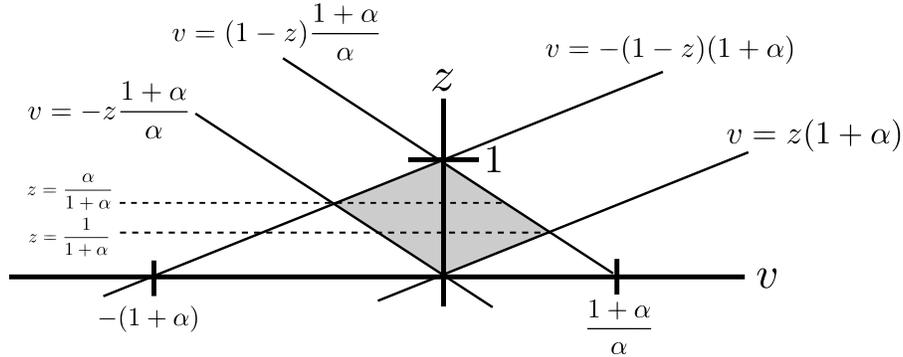}}
\end{center}
\caption{Calculation of the optimal quantizer point density.}
\label{uniformcalcfig}
\end{figure}

For $\alpha > 1$, the geometry of the region defined by the inequalities in (\ref{condzzz1}) and (\ref{condzzz2}) is illustrated in Fig. \ref{uniformcalcfig} as a shaded quadrangle. Correspondingly, we obtain
\begin{align}
\label{bzerpssposdasdA}
\! \! \!  B(z) \!=\! \left\{\! \!  \begin{array}{rl} \int_{-z\frac{1+\alpha}{\alpha}}^{z(1+\alpha)} |v|^{r-2}(r\!-\!1)\mathrm{d}v = z^{r-1}(1+\alpha)^{r-1}(1\!+\!\frac{1}{\alpha^{r-1}}), \!\! & z\!\in\![0,\frac{1}{1+\alpha}], \\ 
\int_{-z\frac{1+\alpha}{\alpha}}^{(1-z)\frac{1+\alpha}{\alpha}} |v|^{r-2}(r\!-\!1)\mathrm{d}v = (\frac{1+\alpha}{\alpha})^{r-1}((1\!-\!z)^{r-1}\!+\!z^{r-1}),\!\!  & z\!\in\![\frac{1}{1+\alpha} , \frac{\alpha}{1+\alpha}], \\ 
\int_{-(1-z)(1+\alpha)}^{(1-z)\frac{1+\alpha}{\alpha}} |v|^{r-2}(r\!-\!1)\mathrm{d}v = (1\!-\!z)^{r-1}(1\!+\!\alpha)^{r-1}(1\!+\!\frac{1}{\alpha^{r-1}}),\!\!  & z\!\in\![\frac{\alpha}{1+\alpha} ,1], \\
0,\!\!  & z\!\notin\![0,1]. 
 \end{array} \right.\! \!
\end{align}
%Calculating the integrals and simplifying, we have
%\begin{align}
%B(z) = \left\{ \begin{array}{rl} z^{r-1}(1+\alpha)^{r-1}(1+\frac{1}{\alpha^{r-1}}), & z\in[0,\frac{1}{1+\alpha}], \\ 
%(\frac{1+\alpha}{\alpha})^{r-1}((1-z)^{r-1}+z^{r-1}), & z\in[\frac{1}{1+\alpha} , \frac{\alpha}{1+\alpha}], \\ 
%(1-z)^{r-1}(1+\alpha)^{r-1}(1+\frac{1}{\alpha^{r-1}}) , & z\in[\frac{\alpha}{1+\alpha} ,1], \\
%0, & z\notin[0,1]. 
% \end{array} \right.
%\end{align}
According to Theorem \ref{theorem2}, the optimal point density at $z$ is proportional to the cube root of $B(z)$. The normalizing constant can be calculated to be
\begin{align}
\! \!\!  \int_0^1\! \left(B(z)\right)^{\frac{1}{3}}\!\mathrm{d}z 
 \label{laksjdlaksjda} & = \frac{6}{(r\!+\!2)(1\!+\!\alpha)}\Bigl(1\!+\!\frac{1}{\alpha^{r-1}}\Bigr)^{\!\frac{1}{3}} \!+\!
\Bigl(1\!+\!\frac{1}{\alpha}\Bigr)^{\!\frac{r-1}{3}}\!\!\!\int_{\frac{1}{1+\alpha}}^{\frac{\alpha}{1+\alpha}}((1\!-\!z)^{r-1}\!+\!z^{r-1})^{\frac{1}{3}}\mathrm{d}z. \!\!
\end{align}
The integral in (\ref{laksjdlaksjda}) cannot be expressed in terms of elementary functions, but can easily be evaluated numerically. Also, for the special case of $\alpha = 1$, the integral vanishes so that we  have, simply
\begin{align}
\label{pqiwjepoqjwepoqjwe1}
\int_0^1 \left(B(z)\right)^{\frac{1}{3}}\mathrm{d}z   = \frac{2^{\frac{1}{3}}3}{r+2}.
\end{align}
Also, when $X_1$ and $X_2$ are independent and uniform on $[0,1]$, the random variable $\|X_1 - X_2\| = |X_1 - X_2|$ has PDF $f_{|X_1 - X_2|}(z) = 2(1-z),\,z\in[0,1]$. Therefore,
\begin{align}
\label{pqiwjepoqjwepoqjwe2}
\mathrm{E}\|X_1 - X_2\|^r = \int_0^1 z^r f_{|X_1-X_2|}(z) \mathrm{d}z = \frac{2}{(r+1)(r+2)}.
\end{align}
A closed-form  asymptotic expression for the optimal asymptotic distortion can then obtained by substituting (\ref{laksjdlaksjda}) and (\ref{pqiwjepoqjwepoqjwe2}) to Theorem \ref{theorem2}. One only needs to numerically evaluate the integral in (\ref{laksjdlaksjda}). In particular, for $\alpha = 1$, i.e. when the samples from $X_1$ and $X_2$ are weighted equally, we obtain
\begin{align}
\delta(U) =  \frac{2^r}{(r+1)(r+2)}  + \frac{18 r}{2^{r}(r+2)^3} \frac{1}{ n^2}+ o\Bigl(\frac{1}{ n^2}\Bigr),
\end{align}
According to (\ref{bzerpssposdasdA}) and Theorem \ref{theorem2}, the optimal point density function that achieves (\ref{kabsdknabqwheqwe}) is proportional to $(1 - |2z - 1|)^{r-1}$ on $[0,1]$, and vanishes everywhere else. 

The case $\alpha  < 1$ can be handled in a similar manner. This concludes our example.
\hfill\qedsymbol
\end{example}

It is not clear how to extend the analysis to $L > 2$ sources. Note however that the numerical design of the quantizer (i.e. a minimization of (\ref{objfun})) in the general case is always possible via the following variant of the generalized Lloyd algorithm: First, one initializes some arbitrary $U$, and then iterates between the two steps of calculating the generalized Voronoi cells
\begin{align}
V_j \leftarrow \left\{[x_1 \cdots x_L]: \sum_{i = 1}^L \lambda_{i} \|u_j - x_{i}\|^r \leq \sum_{i = 1}^L \lambda_{i} \|u_k - x_{i}\|^r,\,\forall k\right\},\,j=1,\ldots,n,
\end{align}
and the generalized centers
\begin{align}
\label{qpoweipqoweqw}
u_j \leftarrow \arg\min_u \int_{V_j} \sum_{i = 1}^L \lambda_{i} \|u - x_{i}\|^r f_{\mathbf{X}}(\mathbf{x})\mathrm{d}\mathbf{x}.
\end{align}
It is easily seen that this algorithm results in a non-increasing average distortion at every iteration and thus it converges in a cost-function sense.  Moreover, the center calculation (\ref{qpoweipqoweqw}) can be accomplished in a computationally-efficient manner as it is convex for any $r \geq 1$. We will use this algorithm to validate our analytical results in the next section over various datasets.

\section{Numerical Results}
\label{secNumerical}

In this section, we provide numerical experiments to show the performance of our algorithms over various datasets and to verify our analytical results. 

\subsection{Performance Results for Clustering Through Noisy Observations}
First, we show that our proposed approach of clustering noisy observation vectors to a common center is superior to naively clustering a concatenated version of noisy observation vectors. We follow the same notation as in Section \ref{secAppScenarios}, but consider now a real dataset with a practical noisy observation scenario. Specifically, let $\mathcal{D}'$ be the Iris dataset \cite{fisher1936use}, consisting of $150$ four dimensional vectors, where the components of the vectors correspond to the sepal length, sepal width, petal length, and petal width of each flower. In practice, one may not be able to access the true $\mathcal{D}'$, but its noisy version: For example, multiple drones may measure a given iris from a distance over the air, resulting in multiple noisy observations of a sample in $\mathcal{D}'$. Mathematically, let $X'$ be uniformly distributed on $\mathcal{D}'$. We assume that the observations are given by the random variables $X_i = X' + N_i,\,i=1,\ldots,L$, where $N_1,\ldots,N_L$ are independent noise random variables. Through the samples $X_1,\ldots,X_L$, our goal is to obtain a clustering that is as close to the clustering of $\mathcal{D}'$ as possible. 

\begin{figure}
\begin{center}
\scalebox{0.6}{\includegraphics{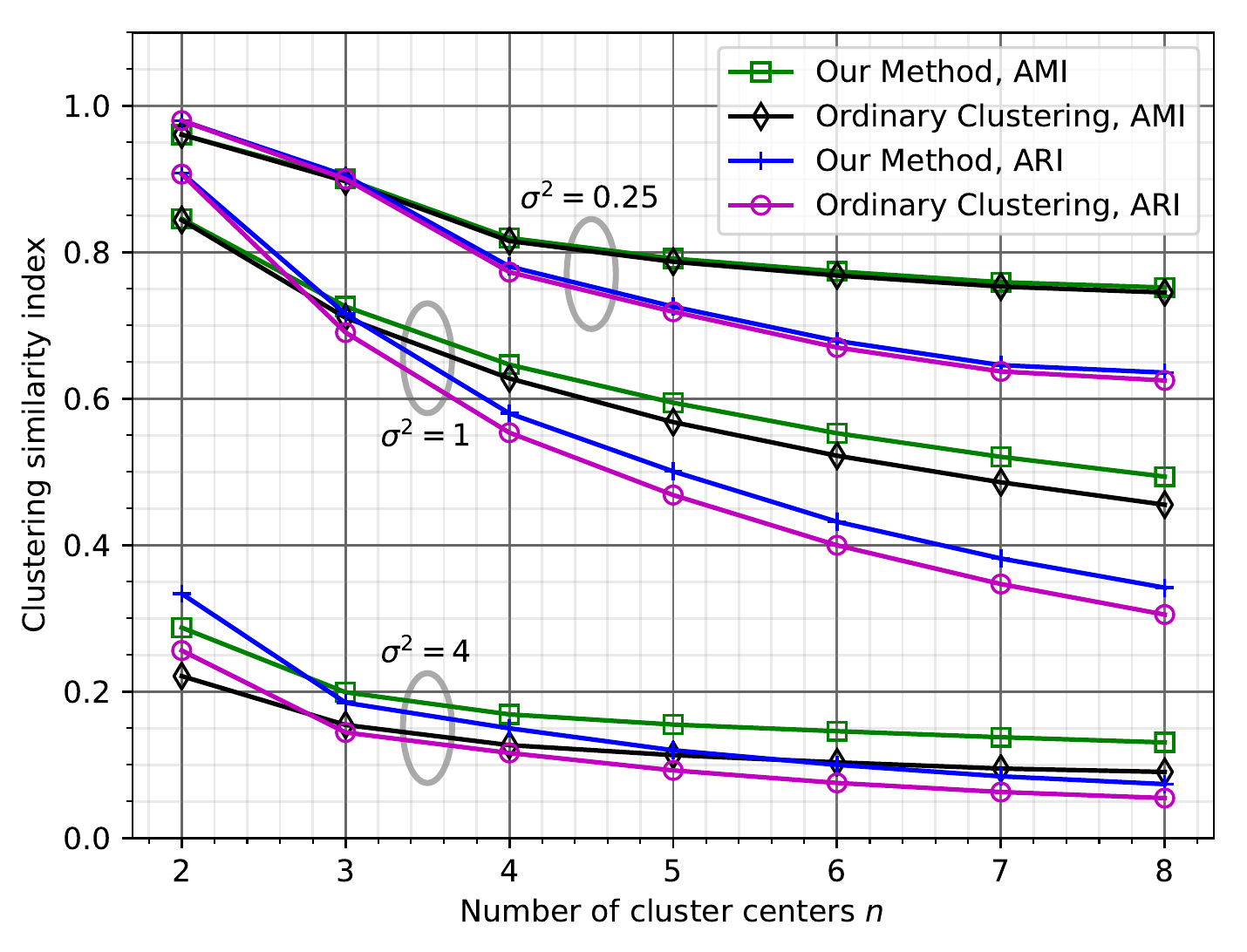}}
\end{center}
\caption{Clustering via four observations with Gaussian noise.}
\label{fig1}\vspace{-5pt}
\end{figure}

One one hand, one can consider the ordinary $k$-means clustering of $[X_1\cdots X_L]$, which we refer to as the ``ordinary clustering'' scenario. Our method instead relies on minimizing (\ref{ourproblem}) for the special case $r=2$. 
 In Fig. \ref{fig1}, we compare the clusterings obtained using two approaches in terms of their similarity with the $k$-means clustering of the original dataset $\mathcal{D}'$. The horizontal axis is the number of cluster centers, and the vertical axis represents the similarity measure. We consider both the ARI and AMI similarity measures. A Monte Carlo similarity measure average $\gamma$ is  accurate within $[\gamma-0.001,\gamma+0.001]$ with $95\%$ confidence. We also consider four observations $L=4$, and that $N_i$ are zero-mean Gaussian random variables with variance $\sigma^2$. First, we can observe that our approach outperforms ordinary clustering uniformly for all scenarios and for both similarity measures. The improvement is particularly notable when there is more noise in the observations or when the number of cluster centers are large. In particular, for $3$ cluster centers, which is the ground truth for the number of clusters of the dataset, the improvement of our method over ordinary clustering in terms of the ARI measure is $0.5\%$, $3.7\%$, and $28\%$ for $\sigma^2 = 0.25,1,$ and $4$, respectively. 
 
 \begin{figure}
 \begin{center}
\scalebox{0.6}{\includegraphics{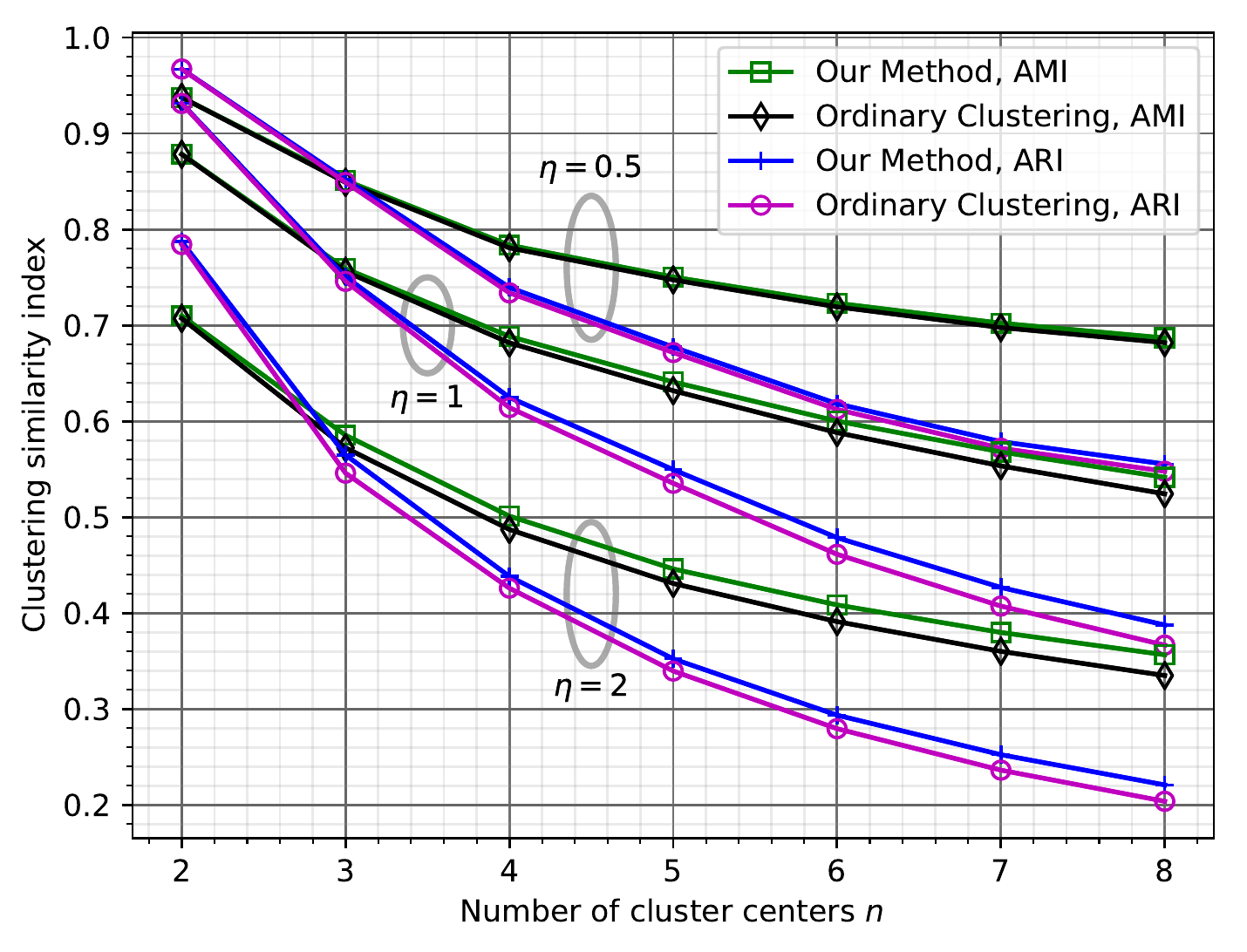}}
\end{center}
\caption{Clustering via two observations with uniform noise.}
\label{fig2}
\end{figure}
 
 Advantage of our method over ordinary clustering carries well over non-Gaussian noise models and different number of observations over different datasets. In Fig. \ref{fig2},  we show the results for the case where each $N_i$ are uniformly distributed on $[-\eta,\eta]$ for $\eta\in\{0.5,1,2\}$ for $L=2$. We can similarly observe that our approach that relies on quantizing to a common cluster center provides a better performance.
 
In Fig. \ref{fig2xxxx}, we show the results for UCI Wine dataset \cite{vandeginste1990parvus}, which consists of $178$ $13$-dimensional vectors. Since, for this dataset, the components of vectors have vastly different variances, we have preprocessed the dataset such that each component has unit variance. Except for the case of $2$ cluster centers, our quantization approach outperforms the ordinary $k$-means algorithm for all scenarios. 

 \begin{figure}
 \begin{center}
\scalebox{0.6}{\includegraphics{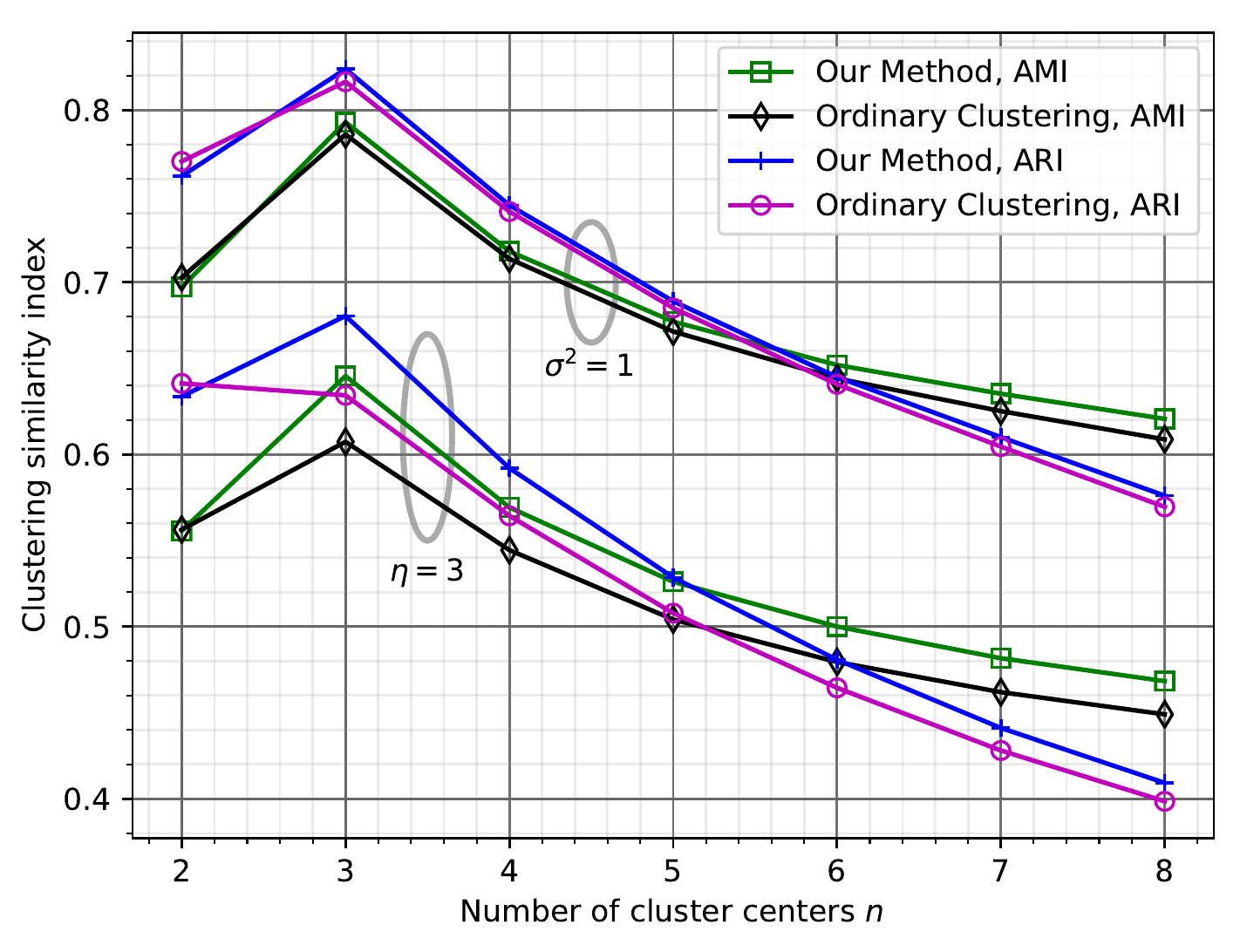}}
\end{center}
\caption{Clustering via noisy observations from the Wine dataset.}
\label{fig2xxxx}
\end{figure}

% Advantage of our method over ordinary clustering carries well over non-Gaussian noise models and different number of observations over different datasets. In Fig. \ref{fig2},  we show the results for the case where each $N_i$ are uniformly distributed on $[-\eta,\eta]$ for $\eta\in\{0.5,1,2\}$ for $L=2$. We can similarly observe that our approach that relies on quantizing to a common cluster center provides a better performance.
% 
%In Fig. \ref{}, we show the results for UCI Wine dataset \cite{vandeginste1990parvus}, which consists of $178$ $13$-dimensional vectors. Since, for this dataset, the components of vectors have vastly different variances, we have preprocessed the dataset such that each component has unit variance. Except for the case of $2$ cluster centers, our quantization approach outperforms the ordinary $k$-means algorithm for all scenarios. 
%

\begin{figure}
    \centering
    \begin{subfigure}[b]{0.49\textwidth}
    \scalebox{0.45}{\includegraphics{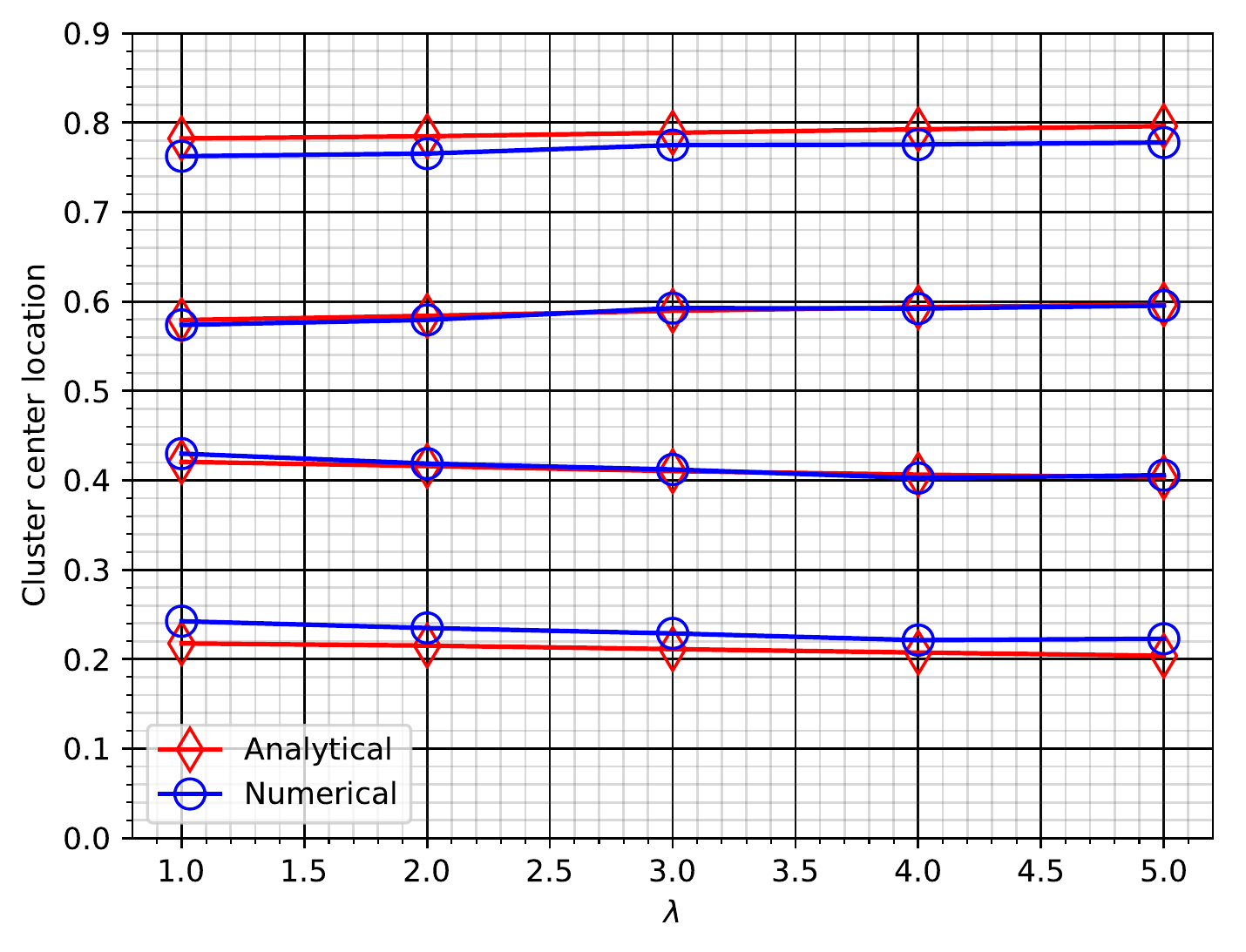}}
        \caption{$n=4,r=3$}
                \label{figlocs1}
    \end{subfigure}
        \begin{subfigure}[b]{0.49\textwidth}
    \scalebox{0.45}{\includegraphics{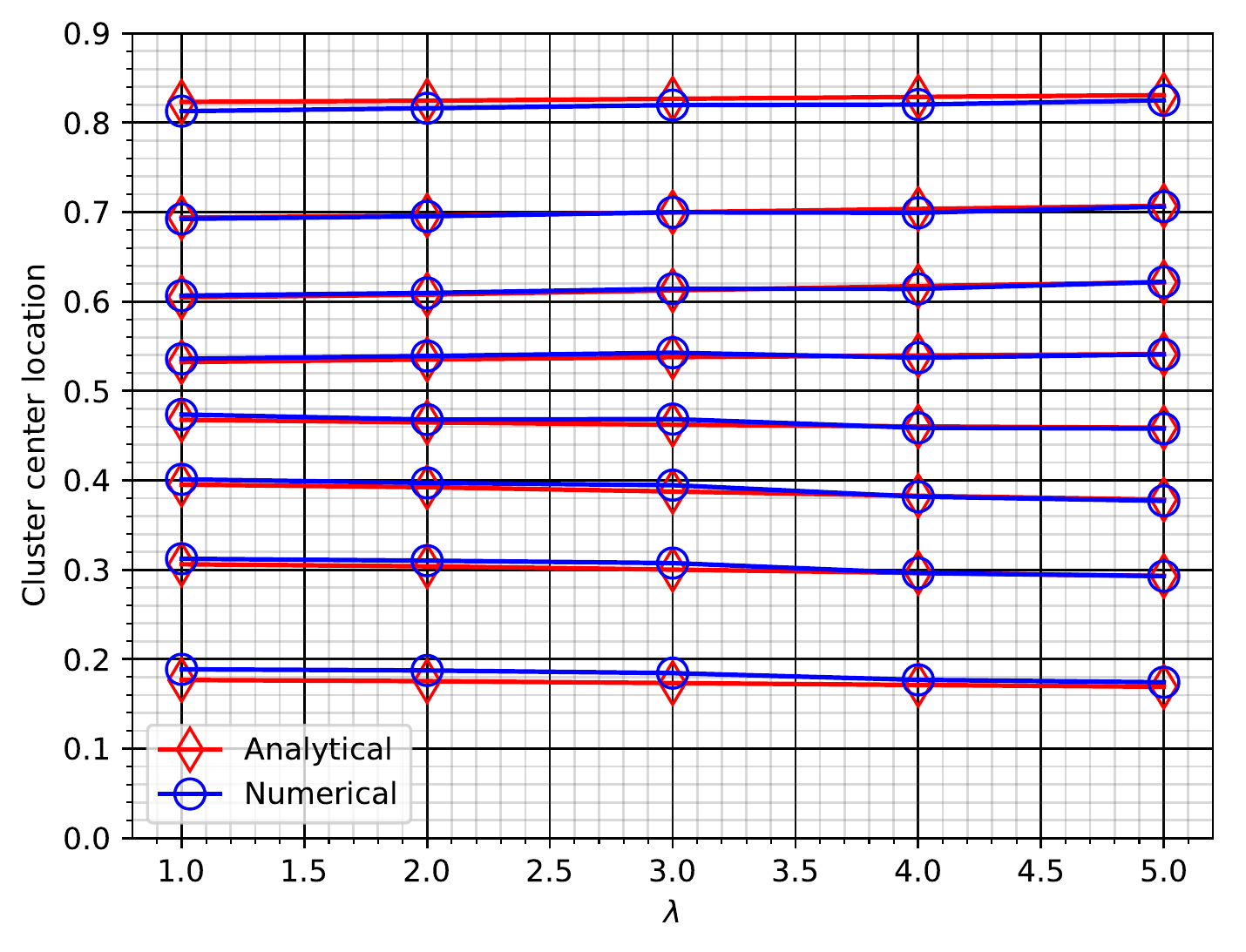}}
    \caption{$n=8,r=4$}
        \label{figlocs2}
    \end{subfigure}
    \caption{Analytical vs. numerical cluster center locations}
    \label{figlocsgeneral}
\end{figure}

\subsection{Validation of High Resolution Analysis}
We now provide numerical experiments that verify the high resolution results provided by Theorems \ref{theorem1} and \ref{theorem2}. We consider the same scenario as in Example \ref{example2} for different values of $r$ and $n$. In Fig. \ref{figlocsgeneral}, we compare the cluster centers obtained using our generalized Lloyd algorithm (labeled ``Numerical'') with those provided by Theorem \ref{theorem2} (labeled ``Analytical'') for the cases $n=4,\,r=3$ in Fig. \ref{figlocs1} and $n=8,\,r=4$ in Fig. \ref{figlocs2}. The horizontal axis represents $\lambda\in\{1,\ldots,5\}$, and the vertical axis represents the cluster center or the quantization point locations. Note that Theorem \ref{theorem2} provides the optimal quantizer point density function, not the individual quantization points or cluster centers. We may use, however, inverse transform sampling to obtain a sequence of quantization points that will be faithful to the quantizer point density function. Namely, if the desired point density function is $\lambda(x)$, we use the quantization points $\smash{\Lambda^{-1}(\frac{2i - 1}{2n}),\,i=1,\ldots,n}$,  where $\smash{\Lambda(y) \triangleq \int_{-\infty}^y \lambda(x)\mathrm{d}x}$ is the cumulative point density function. \begin{wrapfigure}{r}{0.49\textwidth} 
\scalebox{0.45}{\includegraphics{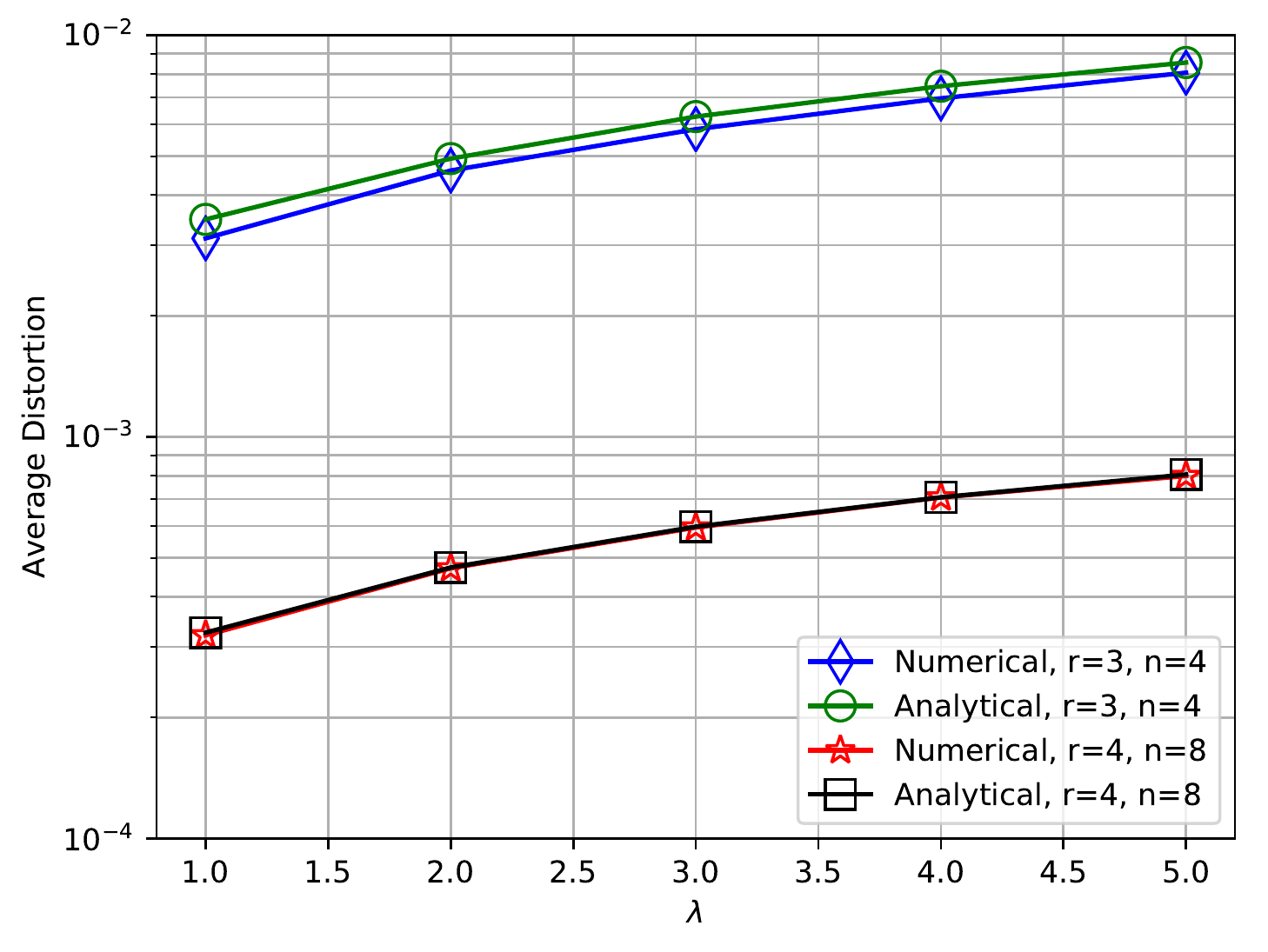}}
\caption{Analytical vs. numerical average distortions.}
\label{figavdistos}
\end{wrapfigure}Although the results of Theorem \ref{theorem2} are only valid asymptotically, we can observe that they still provide a very accurate description of the optimal quantizers for number of cluster centers as low as $n=4$. There is only slight mismatch for centers that are close to the boundaries $0$ and $1$. A similar phenomena can be observed for the case $n=8$ and $r=4$ in Fig. \ref{figlocs2}, but the amount of mismatch is lower. Also, the theory can precisely predict very subtle changes in the optimal quantization points, e.g. the movement of the optimal location for the third quantization point from $0.4$ from $0.38$ as $\lambda$ grows from $1$ to $5$. In Fig. \ref{figavdistos}, we show the average distortion performances corresponding to the cluster centers in Fig. \ref{figlocsgeneral}. Again, quantization theory can precisely predict the average distortion performance even when $n$ is as small as $4$. The difference between the analytical and the numerical results become indistinguishable for the case of $n=8$ cluster centers. 

For $r=2$, the proposed algorithms correspond to a simple scheme where one quantizes a weighted average of the noisy observations. The case $r < 2$ is especially useful for handling datasets with outlier observations (e.g. many noiseless observations and few noisy observations). Numerical simulations corresponding to these cases will be provided elsewhere.

\section{Conclusion}
\label{secConclusions}
We have considered quantizing an $Ld$-dimensional sample, which is obtained by concatenating $L$ vectors from datasets of $d$-dimensional vectors, to a $d$-dimensional cluster center. An application area of such a quantization strategy is when one wishes to cluster a dataset through $L$ noisy observations of each of its members. We have found analytical formulae for the average distortion performance and the optimal quantizer structure in the asymptotic regime where the number of cluster centers are large. We have shown that our clustering approach outperforms the naive approach that relies on quantizing the $Ld$-dimensional noisy observation vectors to $Ld$-dimensional centers.

\bibliographystyle{IEEEtran}
\bibliography{references} 

\end{document}